\documentclass[11pt]{article}
\usepackage[margin=0.82in]{geometry}
\usepackage{amsmath,amssymb,amsfonts,bm,mathtools}
\usepackage{graphicx,booktabs,array,microtype,xcolor,enumitem,float}
\usepackage{amsthm}
\usepackage{hyperref}
\usepackage[capitalise,noabbrev]{cleveref}
\hypersetup{colorlinks=true,linkcolor=blue!45!black,citecolor=blue!45!black,urlcolor=blue!45!black}
\newcommand{\E}{\mathbb{E}}
\newcommand{\R}{\mathbb{R}}
\newcommand{\diag}{\operatorname{diag}}

\newcommand{\Norm}{\mathcal{N}}
\newcommand{\WP}{W_{\!P}}
\newcommand{\uepi}{u_{\mathrm{epi}}}
\newcommand{\ualea}{u_{\mathrm{alea}}}

\newcommand{\Levi}{\mathcal{L}_{\mathrm{evi}}}

\newtheorem{theorem}{Theorem}

\theoremstyle{definition}
\newtheorem{definition}{Definition}

\title{\textbf{ELECTRIC: Evidential Learning--Enhanced CT Reconstruction via Iterative Correction}\\[0.35em]
\large -- Toward Adaptive Bayesian Reconstruction via Learned Prior Precision}
\author{Ge Wang\\
\small Department of Biomedical Engineering\\
\small Department of Electrical, Computer and Systems Engineering\\
\small Department of Computer Science\\
\small Rensselaer Polytechnic Institute, Troy, New York, USA\\
wangg6@rpi.edu
\date{\today}}
\begin{document}
\maketitle

\begin{abstract}
Computed tomography (CT) reconstruction is fundamentally a Bayesian inverse problem in which measurement information is combined with prior knowledge. Classical statistical reconstruction usually assumes that confidence in the prior is prespecified and fixed. Modern deep learning methods learn increasingly expressive image priors and may estimate predictive uncertainty, but they rarely use that uncertainty to determine how strongly the learned prior should influence reconstruction. Here we introduce \textbf{ELECTRIC} (Evidential Learning--Enhanced CT Reconstruction via Iterative Correction), a physics-guided Bayesian formulation. An evidential neural network provides an image proposal and an error-predictive epistemic-uncertainty surrogate. The latter is converted into an adaptive precision field and inserted into a Poisson-weighted MAP update. The resulting image--evidence--precision--reconstruction loop treats prior confidence as a learned state variable of iterative reconstruction. In addition to the formulation and theoretical analysis, we report two simulation studies on image slices from the AAPM Mayo Clinic Low-Dose CT dataset: a mechanism-validation pilot using transparent surrogate estimators, and a feasibility study in which a trained Normal--Inverse--Gamma evidential network drives the full closed loop. On held-out patients, the learned prior mean reduces reconstruction error by roughly 70\% relative to filtered back-projection, the learned epistemic uncertainty is error-predictive and supports selective trust, and the physics-guided update restores measurement consistency while the adaptive-precision reconstruction matches or exceeds a validation-tuned fixed prior and remains markedly more robust to prior-strength misspecification. Together these results demonstrate the complete ELECTRIC closed-loop pipeline, while identifying formal uncertainty calibration and joint training as the principal directions for future work.
\end{abstract}

\noindent\textbf{Keywords---} Computed tomography (CT), Bayesian reconstruction, adaptive prior precision, evidential learning, uncertainty quantification, Normal--Inverse--Gamma distribution, hierarchical Bayesian inverse problems, iterative reconstruction, physics-guided deep learning.

\section{Introduction}
Image reconstruction is an inverse problem that balances two complementary information sources: \emph{measurement consistency}, dictated by imaging physics, and \emph{prior knowledge}, describing plausible image structure. In Bayesian terms, these are represented by a likelihood and a prior. The prior itself contains two conceptually distinct quantities: its mean, which describes what image is expected, and its precision, which determines how strongly that expectation should influence reconstruction.

Computed tomography (CT) provides a representative setting. Reconstruction methods have progressed from analytical inversion to statistical iterative reconstruction (SIR), model-based iterative reconstruction (MBIR), sparsity-promoting optimization, plug-and-play priors, regularization by denoising, deep unrolling, flow-based reconstruction, and foundation models. Much of this progress has focused on improving the prior mean: handcrafted models have been replaced by increasingly expressive learned representations. Comparatively less attention has been paid to learning the confidence assigned to those priors and, needless to say, to utilization of the confidence in image reconstruction.

Classical Bayesian reconstruction typically prescribes prior confidence through one or more regularization parameters. Hierarchical Bayesian methods improve this formulation by estimating covariance or precision hyperparameters inside a specified prior family. Deep learning further improves the prior representation, but commonly leaves its reliability fixed, implicit, or spatially uniform. This assumption is questionable across anatomy, pathology, scanner vendors, dose levels, acquisition geometries, and distribution shifts.

Bayesian neural networks, Monte Carlo dropout, deep ensembles, Laplace approximations, and evidential learning can estimate predictive uncertainty. Such estimates are useful for visualization, calibration, quality assurance, and failure detection. Yet uncertainty generally remains passive: it is reported after reconstruction and not intended to alter the reconstruction result.

Here we propose \textbf{ELECTRIC} for Evidential Learning–Enhanced CT Reconstruction via Iterative Correction, a physics-guided Bayesian formulation in which prior precision is a learned, spatially adaptive state variable of iterative reconstruction. Instead of prescribing a global prior strength, ELECTRIC estimates voxel-wise evidence, converts epistemic uncertainty into an adaptive precision field, and uses that field in a MAP update. Strongly supported regions receive greater prior influence; uncertain regions rely more heavily on the measured projection data. The result is a closed loop: image reconstruction $\rightarrow$ evidence $\rightarrow$ uncertainty $\rightarrow$ precision $\rightarrow$ updated image.

The central contribution is therefore not a particular NIG mapping or network architecture. It is the proposal that learned priors have two quantities to estimate: a mean and a confidence. Current deep reconstruction has largely learned what to reconstruct. ELECTRIC proposes learning how much that reconstruction should be trusted. Evidential regression is adopted in the present work as one computationally efficient realization, whereas Bayesian neural networks, deep ensembles, diffusion-based uncertainty estimators, and future probabilistic models can all be incorporated within the proposed framework without changing its underlying Bayesian principle.
In other words, conventional deep reconstruction primarily learns
what image to reconstruct, whereas ELECTRIC proposes learning both
what image to reconstruct and a precision or confidence map associated with that image.

The paper contributes:
\begin{enumerate}[leftmargin=1.5em]
\item a Bayesian formulation in which prior precision is learned rather than prescribed;
\item a principled separation between measurement uncertainty in the likelihood and model uncertainty in the prior; and
\item a multi-stage training strategy aligned with the intermediate states encountered during iterative inference.
\end{enumerate}

\begin{center}
\fbox{\parbox{0.91\textwidth}{\textbf{Central thesis of ELECTRIC.} Classical Bayesian reconstruction estimates the image while assuming prior confidence is known. ELECTRIC estimates both the learned prior mean and how strongly that prior should influence the current reconstruction:
\[
(\gamma,\nu,\alpha,\beta)\longrightarrow \uepi\longrightarrow \Sigma_{\mathrm{prior}}\longrightarrow \Lambda\longrightarrow \text{physics-guided MAP reconstruction}.
\]}}
\end{center}

\section{Bayesian Reconstruction: Foundations and Evolution}
This section reviews the concepts needed to distinguish a learned prior mean from learned prior confidence and places ELECTRIC within the evolution from fixed regularization to adaptive Bayesian precision.

\begin{table}[t]
\centering
\caption{Major variables and symbols used in ELECTRIC.}
\label{tab:notation}
\small
\begin{tabular}{>{\raggedright\arraybackslash}p{0.18\textwidth} p{0.73\textwidth}}
\toprule
\textbf{Symbol} & \textbf{Meaning}\\
\midrule
$x$ & CT attenuation image.\\
$p$ & Log-transformed projection measurement.\\
$A$, $A^T$ & Forward projector and adjoint/backprojector.\\
$\Sigma_{\mathrm{meas}}$, $\WP$ & Measurement covariance and precision, with $\WP=\Sigma_{\mathrm{meas}}^{-1}$.\\
$\gamma$ & Learned prior mean or image proposal.\\
$\nu,\alpha,\beta$ & NIG evidence parameters, with $\nu>0$, $\alpha>1$, and $\beta>0$ when the uncertainty moments below are used.\\
$\ualea$, $\uepi$ & Aleatoric and epistemic uncertainty surrogates.\\
$\Sigma_{\mathrm{prior}}$, $\Lambda$ & Adaptive prior covariance and Bayesian prior precision, with $\Lambda=\Sigma_{\mathrm{prior}}^{-1}$; precision is interpreted as prior confidence.\\
$\lambda_{\min},\lambda_{\max}$
&
Minimum and maximum allowable prior precision, respectively.
The maximum is attained at zero epistemic uncertainty, whereas the
minimum provides a strictly positive precision floor.\\
$\kappa$ & Sensitivity coefficient; $\kappa\uepi$ is dimensionless.\\
$x^{(k)},\gamma^{(k)},\Lambda^{(k)}$ & Image, prior mean, and prior precision at outer iteration $k$.\\
$x_{\mathrm{gt}}$ & Ground-truth/reference image used during supervised training.\\
\bottomrule
\end{tabular}
\end{table}

\subsection{Statistical Bayesian CT reconstruction}
Let $x\in\R^N$ denote the attenuation image and $p\in\R^M$ the log-transformed projection data. Under the standard local Gaussian approximation to transformed Poisson measurements,
\begin{equation}
p\mid x\sim\Norm(Ax,\Sigma_{\mathrm{meas}}),
\end{equation}
and
\begin{equation}
\mathcal L_{\mathrm{data}}(x)=\frac12(Ax-p)^T\WP(Ax-p),\qquad \WP=\Sigma_{\mathrm{meas}}^{-1}.
\end{equation}
Precision is inverse covariance: lower measurement variance produces greater statistical weight. With a Gaussian prior $x\sim\Norm(\gamma,\lambda^{-1}I)$, conventional MAP reconstruction solves
\begin{equation}
x^*=\arg\min_x\frac12\|Ax-p\|_{\WP}^2+\frac{\lambda}{2}\|x-\gamma\|_2^2.
\end{equation}
The mean $\gamma$ specifies the expected image; the scalar precision $\lambda$ specifies how strongly that expectation is enforced.

\subsection{Bayesian deep learning and uncertainty estimation}
Aleatoric uncertainty represents irreducible variability in the observations, whereas epistemic uncertainty reflects limited model knowledge. Bayesian neural networks, Monte Carlo dropout, deep ensembles, and Laplace approximations provide different approximations to predictive uncertainty, often at the cost of repeated forward passes or additional optimization. Most uncertainty-aware reconstruction methods use these estimates for calibration, visualization, or failure detection. ELECTRIC differs functionally: uncertainty changes the reconstruction objective itself.

\subsection{Evidential regression and parameter constraints}
Deep evidential regression predicts a distribution over a Gaussian mean and variance. For a scalar target $y$,
\begin{align}
y\mid\mu,\sigma^2 &\sim \Norm(\mu,\sigma^2),\\
\mu\mid\sigma^2 &\sim \Norm\!\left(\gamma,\frac{\sigma^2}{\nu}\right),\\
\sigma^2 &\sim \operatorname{InvGamma}(\alpha,\beta),
\end{align}
with $\nu>0$, $\alpha>0$, and $\beta>0$. 
%Marginalization yields a Student-$t$ distribution with $2\alpha$ degrees of freedom. 
The uncertainty moments used by ELECTRIC require $\alpha>1$:
\begin{align}
\E[y]&=\gamma,\\
\ualea&=\frac{\beta}{\alpha-1},\\
\uepi&=\frac{\beta}{\nu(\alpha-1)},\\
\operatorname{Var}(y)&=\frac{\beta(1+\nu)}{\nu(\alpha-1)}.
\end{align}
%No condition such as $\nu>2$ is required; $\nu$ must only remain positive.
The three quantities above have distinct physical interpretations.
The aleatoric uncertainty $u_{\mathrm{alea}}$ measures the expected
observation noise or intrinsic variability that cannot be reduced even
with additional training data.
In contrast, the epistemic uncertainty
$u_{\mathrm{epi}}$ quantifies uncertainty arising from limited model
knowledge or insufficient evidence and therefore may decrease as more
representative training data or a stronger predictive model become
available.
The predictive variance combines both effects,
$\mathrm{Var}(y)
=
u_{\mathrm{alea}}+u_{\mathrm{epi}},$
indicating that the overall uncertainty consists of an irreducible
measurement component and a reducible model component.
Accordingly, the predictive variance itself is not used directly
to regulate adaptive prior precision, because it combines both
reducible and irreducible sources of uncertainty.
Within the proposed ELECTRIC framework, only the epistemic component is
used to regulate Bayesian prior precision, whereas aleatoric uncertainty
is already represented by the measurement likelihood through the data
precision matrix $W_P$.

%Recent critical analyses, including the deliberately titled ``unreasonable effectiveness'' study, argue that 
These evidential outputs may behave as useful heuristics without constituting exact Bayesian posterior uncertainties. ELECTRIC adopts this interpretation. The predicted $\uepi$ is an operational surrogate for prior reliability whose adequacy must be tested by calibration and error analysis.

\subsection{Hierarchical Bayesian reconstruction, IAS, and ARD}
Adaptive prior strength has a long history in Bayesian inverse problems. Hierarchical models augment the image with hyperparameters controlling covariance or component-wise variance. IAS algorithms alternate image and hyperparameter updates, while ARD and sparse Bayesian learning estimate relevance precisions through evidence maximization. These methods show that estimating precision is not itself new.

ELECTRIC differs in the mechanism and role of adaptation. Classical hypermodels infer parameters inside a prescribed prior family. ELECTRIC predicts a state-dependent spatial precision from the current reconstruction and measured data, then repeatedly reinjects that precision into a physics-based MAP update. It is therefore complementary to hierarchical Bayesian inference, with the context-dependent prior reliability that is learned from data and updated across reconstruction states.

\subsection{Boundary with adaptive regularization}
Spatially adaptive TV, anisotropic penalties, local sparsity weights, and other similar methods also vary regularization strength. Their weights are commonly derived from handcrafted features, risk estimates, or fixed local models. ELECTRIC instead learns a Bayesian precision field from an uncertainty estimator and places it explicitly in a prior covariance model. %The conceptual progression is
%\[
%\text{fixed precision}\rightarrow\text{inferred hyperparameters}\rightarrow\text{learned prior mean}\rightarrow\textbf{learned prior precision}.
%\]

\section{Adaptive Bayesian Prior Precision}
The formulation begins with Bayesian precision 
rather than with a specific uncertainty estimator. The prior mean specifies what image is expected; the prior precision specifies how strongly that expectation should influence reconstruction. ELECTRIC proposes learning both.

\subsection{Conceptual overview}
Figure~\ref{fig:concept} summarizes the transition from fixed prior precision, through passive uncertainty reporting, to active adaptive precision. The central state variable is not uncertainty itself but the Bayesian precision constructed from it.

\begin{figure}[t]
\centering
\includegraphics[width=\textwidth]{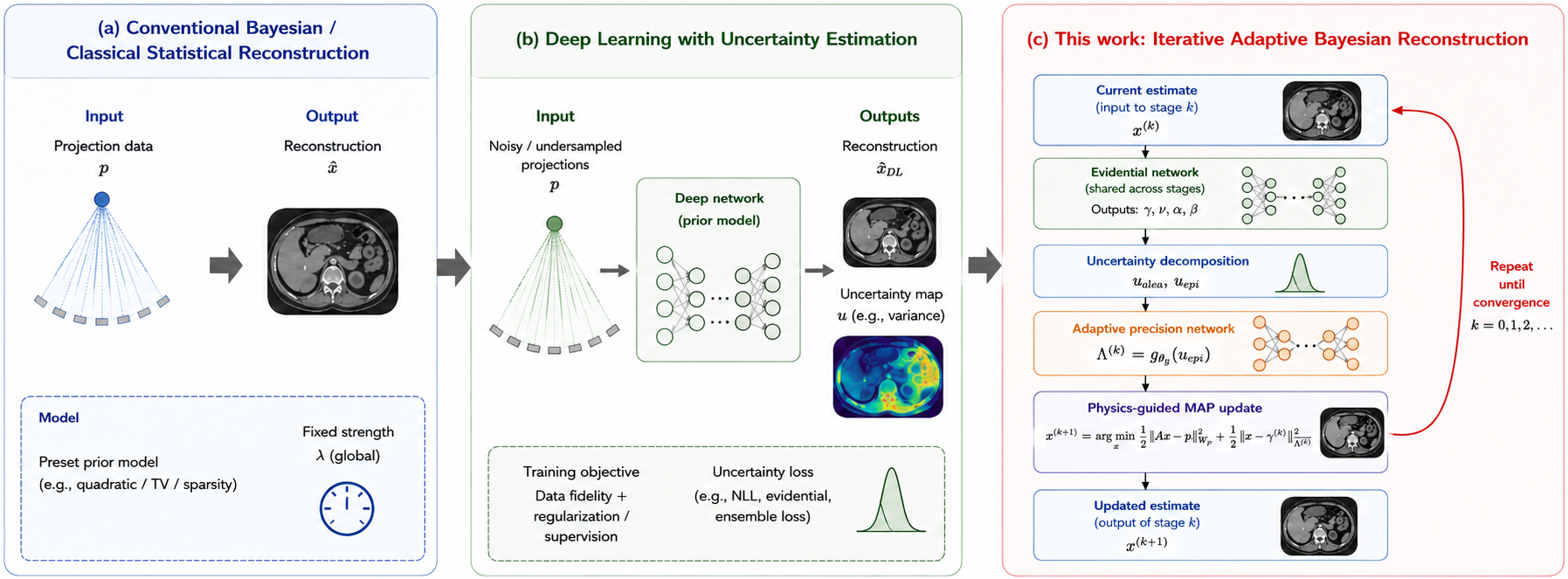}
\caption{Conceptual evolution toward ELECTRIC. (a) Classical methods use a fixed, manually prescribed prior confidence (regularization strength). (b) Deep networks can estimate uncertainty but typically do not use it in reconstruction. (c) ELECTRIC (this work) converts estimated epistemic uncertainty into an adaptive prior precision field and incorporates it into a physics-guided MAP update in an iterative loop.}
\label{fig:concept}
\end{figure}

\subsection{Definitions}
\begin{definition}[Learned prior mean]
At outer iteration $k$, the network output $\gamma^{(k)}$ is the learned prior mean: the image toward which the Bayesian prior is centered.
\end{definition}

\begin{definition}[Bayesian prior precision]
The Bayesian prior precision
\begin{equation}
\Lambda^{(k)}=\left(\Sigma_{\mathrm{prior}}^{(k)}\right)^{-1}
\end{equation}
is the mathematical quantity that determines how strongly the learned prior mean influences the MAP estimate. Throughout this paper, the term \emph{prior confidence} is used as an intuitive interpretation of Bayesian prior precision. Thus, a larger prior precision corresponds to greater confidence in the learned prior and stronger influence on the reconstruction, whereas a smaller prior precision corresponds to lower confidence and greater reliance on the measured projection data.
\end{definition}

\begin{definition}[Adaptive Bayesian prior]
A prior is adaptive when its precision depends on the current reconstruction state and measurements,
\begin{equation}
\Lambda^{(k)}=\Lambda_\theta(x^{(k)},p).
\end{equation}
\end{definition}

\subsection{Bayesian design principles}
The formulation follows five design principles: (i) measurement uncertainty belongs to the data likelihood; (ii) model uncertainty belongs to the image prior; (iii) prior precision should not increase with epistemic uncertainty; (iv) prior precision must remain positive and bounded to stabilize each subproblem; and (v) conventional MAP should be recovered when epistemic uncertainty vanishes.

The measurement model contributes
\begin{equation}
\mathcal L_{\mathrm{data}}(x)=\frac12(Ax-p)^T\WP(Ax-p).
\end{equation}
Given a learned prior mean $\gamma^{(k)}$ and covariance $\Sigma_{\mathrm{prior}}^{(k)}$, the adaptive MAP objective is
\begin{equation}
\boxed{\mathcal J_k(x)=\frac12(Ax-p)^T\WP(Ax-p)+\frac12(x-\gamma^{(k)})^T\Lambda^{(k)}(x-\gamma^{(k)})}.
\label{eq:map}
\end{equation}
The Bayesian construction is therefore
\begin{equation}
\Sigma_{\mathrm{prior}}^{(k)}
=
h\!\left(u_{\mathrm{epi}}^{(k)}\right),
\qquad
\Lambda^{(k)}
=
g\!\left(u_{\mathrm{epi}}^{(k)}\right),
\label{eq:uncertainty_maps}
\end{equation}
where $h$ denotes an uncertainty-to-covariance mapping and
$g$ denotes the corresponding uncertainty-to-precision mapping.

For diagonal covariance models,
\[
\Lambda^{(k)}
=
\operatorname{diag}
\!\left(
h(u_{\rm epi}^{(k)})
\right)^{-1},
\]
where the inverse is taken elementwise.
Different analytical or learned mappings may be adopted without
changing the overall ELECTRIC framework.

\subsection{One evidential realization}
At iteration $k$, the evidential network predicts
\begin{equation}
(\gamma^{(k)},\nu^{(k)},\alpha^{(k)},\beta^{(k)})=f_\theta(x^{(k)},p),
\end{equation}
and
\begin{equation}
\uepi^{(k)}=\frac{\beta^{(k)}}{\nu^{(k)}(\alpha^{(k)}-1)}.
\end{equation}
Only epistemic uncertainty regulates the prior. Aleatoric measurement variability is already represented by $\WP$ in the data likelihood. Epistemic uncertainty measures limited reliability of the learned prior mean and therefore belongs naturally to the prior.

\subsection{Bayesian-consistent evidential parameterization}

The evidential realization in Eq.~(16) directly predicts the four
Normal--Inverse--Gamma (NIG) parameters

\[
(\gamma,\nu,\alpha,\beta)=f_\theta(x,p).
\]

This parameterization is computationally convenient and has been widely
adopted in deep evidential regression.
However, the predicted NIG parameters should not necessarily be interpreted
as the exact posterior resulting from a Bayesian conjugate update.
Instead, they are operational evidential quantities whose usefulness is
ultimately validated through calibration, uncertainty--error correlation,
and downstream reconstruction performance.

An alternative Bayesian-consistent realization may be constructed by
predicting sufficient statistics of a virtual local dataset rather than
predicting the four NIG parameters directly.
Specifically, the network predicts

\[
(n,\bar y,S),
\]

where $n$ denotes the effective evidence size,
$\bar y$ the virtual sample mean,
and $S$ the virtual sample scatter.
Starting from an initial NIG prior

\[
(\gamma_0,\nu_0,\alpha_0,\beta_0),
\]

the posterior parameters are obtained analytically through the conjugate
Normal--Inverse--Gamma update

\begin{align}
\nu &= \nu_0+n,\\
\gamma &= \frac{\nu_0\gamma_0+n\bar y}{\nu_0+n},\\
\alpha &= \alpha_0+\frac{n}{2},\\
\beta &= \beta_0+\frac{S}{2}
+\frac{\nu_0 n}{2(\nu_0+n)}
(\bar y-\gamma_0)^2.
\end{align}

A natural choice in iterative reconstruction is to use the current
reconstruction as the prior mean,

\[
\gamma_0=x^{(k)},
\]

while employing globally shared weak prior hyperparameters
$(\nu_0,\alpha_0,\beta_0)$.
The network then predicts only the incremental evidence
$(n,\bar y,S)$,
which is converted into NIG parameters through exact Bayesian updating.

With $n > 0$ and $S\geq 0$, this realization places every
predicted NIG parameter set in the continuously relaxed reachable
set by construction. Integer-valued $n$ recovers exact finite-sample
Bayesian reachability.
The resulting uncertainty estimation remains fully compatible with the
overall ELECTRIC framework, because the subsequent uncertainty-to-precision
mapping and MAP reconstruction are unchanged.
The direct parameterization in Eq.~(16) is adopted in the present work
because of its simplicity and computational efficiency, whereas the
Bayesian-consistent parameterization above provides an attractive direction
for future investigation.

\subsection{Uncertainty-to-Precision Mapping}

Many direct uncertainty-to-precision mappings $g$ satisfy these
design principles. A bounded rational realization with a strictly
positive precision floor is
\begin{equation}
\Lambda^{(k)}
=
\operatorname{diag}
\left[
\lambda_{\min}
+
\frac{\lambda_{\max}-\lambda_{\min}}
{1+\kappa u_{\mathrm{epi}}^{(k)}}
\right],
\label{eq:adaptive_precision}
\end{equation}
where
\[
0<\lambda_{\min}<\lambda_{\max},
\]
and $\kappa>0$ has reciprocal units of
$u_{\mathrm{epi}}^{(k)}$, so that
$\kappa u_{\mathrm{epi}}^{(k)}$ is dimensionless.
Here, $\lambda_{\max}$ is the precision attained at zero epistemic
uncertainty, whereas $\lambda_{\min}$ is a positive precision floor
that preserves regularization in highly uncertain or poorly observed
directions.

Eq.~(22) is not asserted to be uniquely optimal.
It is a simple bounded rational map satisfying positivity,
smoothness, boundedness, monotone decrease, recovery of
$\lambda_{\max}I$ at zero epistemic uncertainty, and convergence
toward the positive precision floor $\lambda_{\min}I$ as uncertainty
increases. Exponential, logistic, spline-based, or learned
uncertainty-to-precision mappings $g$ remain compatible with the
overall ELECTRIC formulation.

More generally, the uncertainty-to-precision map need not be prescribed analytically.
Instead, it may be parameterized by a trainable function

\[
\Lambda^{(k)} = g_{\phi}\!\left(u_{\mathrm{epi}}^{(k)}\right),
\]

where $\phi$ denotes learnable parameters.
Such a realization allows the mapping from uncertainty to Bayesian prior
precision to be optimized jointly with the reconstruction network under the
overall imaging objective.
Rather than relying on a predetermined functional relationship,
the model can learn how statistical uncertainty can be translated
into task-aware Bayesian prior confidence so as to improve
reconstruction performance while preserving the Bayesian role of
adaptive precision.
The bounded rational map in Eq.~(22) should therefore be viewed as one
interpretable realization of the more general ELECTRIC framework, whereas
learned mappings, spline parameterizations, monotone neural networks,
or other differentiable architectures remain fully compatible with the
proposed formulation.
When monotone dependence is desired, $g_\phi$ should be
parameterized as a monotone decreasing network or constrained
spline so that greater epistemic uncertainty cannot produce
greater prior precision.
This constraint preserves the Bayesian interpretation that
regions with lower confidence in the learned prior should
not receive stronger prior influence during reconstruction.

\subsection{Multi-stage training matched to inference}
The initialization can be, for example, 
\begin{equation}
x^{(0)}=\operatorname{FBP}(p).
\end{equation}
A training trajectory recursively applies one evidence-estimation and MAP-update cycle,
\begin{equation}
x_i^{(k+1)}=\mathcal M_\theta(x_i^{(k)},p_i).
\end{equation}
The total loss at stage $k$ is as follows:
\begin{equation}
L_{\mathrm{stage}}^{(k)}
=
\lambda_{\mathrm{rec}}L_{\mathrm{rec}}^{(k)}
+
\lambda_{\mathrm{NIG}}L_{\mathrm{NIG}}^{(k)}
+
\lambda_{\mathrm{evi}}L_{\mathrm{evi}}^{(k)}
+
\lambda_{\mathrm{sm}}L_{\mathrm{sm}}^{(k)} .
\end{equation}
The reconstruction-consistency term is applied to the MAP output,
\begin{equation}
L_{\mathrm{rec}}^{(k)}
=
\frac12
\left\|
Ax_i^{(k+1)}
-
p_i
\right\|_{W_{P,i}}^2
+
\rho
\left\|
x_i^{(k+1)}
-
x_{\mathrm{gt},i}
\right\|_1 .
\end{equation}
%so that training optimizes the final stage output rather than only the proposal $\gamma^{(k)}$. 
The Student-$t$ NLL and evidence regularizer directly supervise the evidential proposal and uncertainty parameters. Writing $e_r^{(k)}=x_{\mathrm{gt},r}-\gamma_r^{(k)}$ and $\Omega_r^{(k)}=2\beta_r^{(k)}(1+\nu_r^{(k)})$, one convenient NLL form is
\begin{equation}
\begin{aligned}
\ell_{\mathrm{NIG},r}^{(k)}={}&\frac12\log\!\left(\frac{\pi}{\nu_r^{(k)}}\right)-\alpha_r^{(k)}\log\Omega_r^{(k)}\\
&+\left(\alpha_r^{(k)}+\frac12\right)\log\!\left(\nu_r^{(k)}(e_r^{(k)})^2+\Omega_r^{(k)}\right)\\
&+\log\Gamma(\alpha_r^{(k)})-\log\Gamma\!\left(\alpha_r^{(k)}+\frac12\right).
\end{aligned}
\end{equation}
The evidence penalty may be written
\begin{equation}
\Levi^{(k)}=\frac1N\sum_r |e_r^{(k)}|(2\nu_r^{(k)}+\alpha_r^{(k)}).
\end{equation}

For fully unrolled training, the overall objective may be formed by
summing or weighting the stage losses across all iterations. In
stage-wise training, gradients pass through $\gamma^{(k)}$, $\Lambda^{(k)}$, and the MAP solution through CG. In stage-wise training, the completed state $x^{(k+1)}$ is detached before becoming the input to stage $k+1$; $\Lambda^{(k)}$ is not detached inside its own stage, so that the current reconstruction loss still trains the uncertainty-to-precision pathway. Optionally, the last term $\lambda_{\mathrm{sm}}\mathcal L_{\mathrm{sm}}^{(k)}$ regularizes the spatial fields of the evidential parameters, encouraging neighboring voxels to have similar parameter values. Any standard smoothness regularizer (e.g., squared spatial gradients or total variation) may be used.

\begin{figure}[t]
\centering
\includegraphics[width=\textwidth]{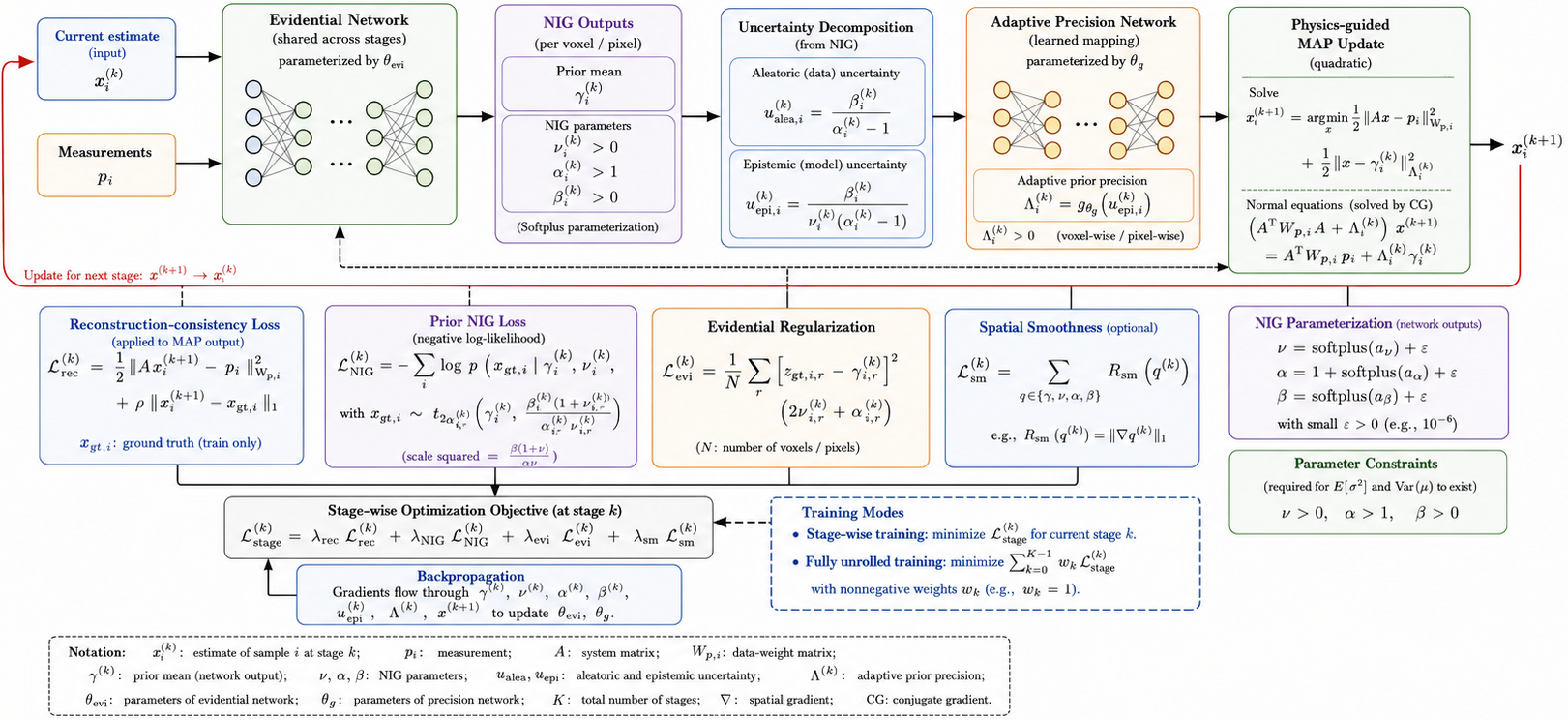}
\caption{Per-stage computation and optimization pipeline of ELECTRIC.
Training supervises both the learned prior proposal and the MAP-updated
reconstruction across intermediate states. Inference alternates
prior estimation, uncertainty-to-precision conversion, and a
physics-guided MAP update.}
\label{fig:train_infer}
\end{figure}

\subsection{MAP update and numerical solution}
For fixed $\gamma^{(k)}$ and $\Lambda^{(k)}$, the first-order condition of \eqref{eq:map} is
\begin{equation}
(A^T\WP A+\Lambda^{(k)})x^{(k+1)}=A^T\WP p+\Lambda^{(k)}\gamma^{(k)}.
\label{eq:normal}
\end{equation}
The coefficient matrix is symmetric positive definite whenever $\Lambda^{(k)}\succ0$, so the system can be solved by conjugate gradients. The previous outer iterate $x^{(k)}$ is used as a warm start because adjacent precision fields generally change gradually.

\subsection{General properties}
For convenience, define
\[
\lambda_{\mathrm{L}}=\lambda_{\min},
\qquad
\lambda_{\mathrm{U}}=\lambda_{\max},
\]
where the subscripts denote the lower and upper admissible
precision bounds, respectively.
\begin{theorem}[Admissible adaptive precision]
Suppose
\[
A^{T}W_{P}A \succeq 0
\]
and the adaptive precision satisfies
\[
0<
\lambda_{\mathrm{L}}I
\preceq
\Lambda^{(k)}
\preceq
\lambda_{\mathrm{U}}I.
\]
Then each MAP subproblem has a unique minimizer, and
\[
\lambda_{\mathrm{L}}
\le
\lambda_{\min}
\!\left(
A^{T}W_{P}A+\Lambda^{(k)}
\right),
\]
\[
\lambda_{\max}
\!\left(
A^{T}W_{P}A+\Lambda^{(k)}
\right)
\le
\lambda_{\max}
\!\left(
A^{T}W_{P}A
\right)
+
\lambda_{\mathrm{U}}.
\]
Consequently,
\begin{equation}
\operatorname{cond}
\!\left(
A^{T}W_{P}A+\Lambda^{(k)}
\right)
\le
\frac{
\lambda_{\max}
\!\left(
A^{T}W_{P}A
\right)
+
\lambda_{\mathrm{U}}
}{
\lambda_{\mathrm{L}}
}.
\end{equation}
\end{theorem}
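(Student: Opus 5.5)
The plan is to treat this as a direct consequence of the Rayleigh-quotient (Courant--Fischer) characterization of extreme eigenvalues of symmetric matrices. Write $H_k = A^{T}W_{P}A+\Lambda^{(k)}$. Since $W_{P}$ is a (symmetric) precision matrix and $\Lambda^{(k)}$ is diagonal, or at least symmetric in the general case of the hypothesis, $H_k$ is symmetric. The hypotheses are stated in the Loewner order, so all bounds will be obtained from the quadratic form
\[
v^{T}H_k v = v^{T}A^{T}W_{P}Av + v^{T}\Lambda^{(k)}v
\]
for unit vectors $v$.

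\textbf{Lower eigenvalue bound.} For $\|v\|_2=1$, the first term is $\ge 0$ because $A^{T}W_{P}A\succeq0$. The second term is $\ge\lambda_{\mathrm{L}}$ because $\Lambda^{(k)}\succeq\lambda_{\mathrm{L}}I$. Taking the minimum over unit $v$ gives $\lambda_{\min}(H_k)\ge\lambda_{\mathrm{L}}>0$.

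\textbf{Upper eigenvalue bound.} For unit $v$, we have $v^{T}A^{T}W_{P}Av\le\lambda_{\max}(A^{T}W_{P}A)$ by the Rayleigh quotient, and $v^{T}\Lambda^{(k)}v\le\lambda_{\mathrm{U}}$ by $\Lambda^{(k)}\preceq\lambda_{\mathrm{U}}I$. Maximizing over $v$ gives the stated bound on $\lambda_{\max}(H_k)$. This is Weyl's inequality specialized to the case where one summand is sandwiched between scalar multiples of the identity.

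\textbf{Existence and uniqueness of the minimizer.} The objective $\mathcal J_k$ in \eqref{eq:map} is a quadratic with Hessian $H_k$. The lower bound just proved makes $H_k\succ0$, so $\mathcal J_k$ is strongly convex with modulus $\lambda_{\mathrm{L}}$, hence coercive, and has exactly one minimizer. Equivalently, $H_k$ is invertible. The first-order condition \eqref{eq:normal} is therefore necessary and sufficient, and its unique solution is $x^{(k+1)}=H_k^{-1}\bigl(A^{T}W_{P}p+\Lambda^{(k)}\gamma^{(k)}\bigr)$.

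\textbf{Condition-number bound.} For a symmetric positive definite matrix, the spectral condition number is $\lambda_{\max}(H_k)/\lambda_{\min}(H_k)$. Bounding the numerator above and the denominator below by the two estimates gives the stated inequality.

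The argument has no real obstacle; the only point needing care is notational. The symbols $\lambda_{\min}$ and $\lambda_{\max}$ are overloaded: they denote both the extreme-eigenvalue operators and the precision bounds in \eqref{eq:adaptive_precision}. That overloading is why the statement introduces $\lambda_{\mathrm{L}}$ and $\lambda_{\mathrm{U}}$. I would state explicitly that the symmetry of $W_{P}$, and hence of $H_k$, is assumed, since it is what licenses the Rayleigh-quotient arguments. I would also remark that the bounds are uniform in $k$, which supports the claim that CG converges at a rate independent of the outer iteration.
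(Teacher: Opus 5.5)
Your proof is correct and follows essentially the same route as the paper. The paper gets the lower eigenvalue bound from $A^{T}W_{P}A\succeq 0$ together with $\Lambda^{(k)}\succeq\lambda_{\mathrm{L}}I$, gets the upper bound from Weyl's inequality (which your Rayleigh-quotient argument simply proves inline), and obtains existence and uniqueness from positive definiteness of the coefficient matrix. Your explicit condition-number step and your note that symmetry of $W_{P}$ (hence of the Hessian) is assumed are small, welcome clarifications that the paper leaves implicit.
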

\begin{proof}
The lower bound follows from
\[
A^{T}W_{P}A \succeq 0
\]
and
\[
\Lambda^{(k)}\succeq\lambda_{\mathrm{L}}I,
\]
while the upper bound follows from Weyl's inequality. Since
\[
\lambda_{\mathrm{L}}>0,
\]
the coefficient matrix
\[
A^{T}W_{P}A+\Lambda^{(k)}
\]
is symmetric positive definite, implying existence and uniqueness of the MAP solution.
\end{proof}

In sparse-view, limited-angle, or otherwise rank-deficient CT, $\lambda_{\min}(A^T\WP A)$ is typically zero or nearly zero. The theorem makes explicit that the positive lower bound of $\Lambda^{(k)}$ is what regularizes poorly observed or null-space directions. This is also the relevant conditioning guarantee for CG.

\textbf{Corollary 1 (Properties of the bounded rational realization).}
For
\[
g(u)
=
\lambda_{\min}
+
\frac{\lambda_{\max}-\lambda_{\min}}
{1+\kappa u},
\qquad u\geq 0,
\]
where
\[
0<\lambda_{\min}<\lambda_{\max},
\qquad
\kappa>0,
\]
we have
\[
\lambda_{\min}<g(u)\leq\lambda_{\max},
\]
and
\[
g'(u)
=
-
\frac{
\kappa(\lambda_{\max}-\lambda_{\min})
}{
(1+\kappa u)^2
}
<0.
\]
Therefore, the map satisfies positivity, boundedness,
continuity, monotone decrease,
recovery of the maximum prior precision
$\lambda_{\max}I$
at zero epistemic uncertainty,
and convergence toward the positive precision floor
$\lambda_{\min}I$
as uncertainty increases.

For a diagonal precision $\Lambda=\diag(\lambda_r)$, the prior term becomes
\begin{equation}
\frac12\sum_r\lambda_r(x_r-\gamma_r)^2,
\end{equation}
which is a spatially adaptive quadratic regularizer. The distinction from handcrafted adaptive regularization is that the weights have a probabilistic interpretation and are estimated from learned prior reliability.

\paragraph{Local fixed-point interpretation.}
Let $F$ denote the network plus uncertainty-to-precision map and $M$ the exact MAP solution operator, so $T=M\circ F$. If $F$ and $M$ are locally Lipschitz with constants $L_F$ and $L_M$, then $L_T\le L_ML_F$. A sufficient local convergence condition is $L_ML_F<1$. This is not claimed for an unconstrained network. It indicates how spectral normalization, Lipschitz control, or contractive architecture design can support convergence. An optional averaged update,
\begin{equation}
x^{(k+1)}=(1-\eta)x^{(k)}+\eta\widetilde x^{(k+1)},\qquad 0<\eta\le1,
\end{equation}
may improve empirical stability, although averaging alone does not guarantee contraction when the underlying map is expansive.

\subsection{Transition to mechanism validation}
The preceding formulation motivates a direct mechanism-level question: does an uncertainty field contain enough spatial information to regulate prior precision usefully before a trained evidential network is available? Section~4.1 addresses this question through a small patient-held-out pilot study using transparent surrogate estimators. The pilot is intentionally limited to validating the uncertainty--precision feedback mechanism; it is not presented as a validation of NIG evidential regression or as a clinical-performance benchmark. Section~4.2 then replaces the surrogates with a trained Normal--Inverse--Gamma evidential network and evaluates the complete closed loop on held-out patients.

\section{Simulation Studies}
\subsection{Mechanism-Validation Pilot Study}
The purpose of this pilot was to test the core operational loop of ELECTRIC before undertaking full GPU-based evidential-network training.
The experiment was deliberately designed as a mechanism-level sanity check. It does not evaluate NIG evidential regression itself, and it does not claim equivalence to a carefully tuned fixed-prior reconstruction. Instead, it asks whether a data-derived uncertainty field contains enough error-predictive information to support selective trust and whether the adaptive precision mechanism behaves as intended on real CT images.

\subsubsection{Experimental setup}
A pool of 100 slices was randomly sampled from all 10 subjects in the AAPM Mayo Clinic Low-Dose CT Grand Challenge dataset~\cite{mccollough2017lowdose}. Each slice was downsampled to $128\times128$, and the routine-dose image was used as the reference $x_{\mathrm{gt}}$. Because the released image pairs do not provide raw projection measurements, projections were simulated following common benchmark practice, analogous to the construction of LoDoPaB-CT~\cite{leuschner2021lodopab}. A two-dimensional parallel-beam geometry with 120 projection angles and 185 detector elements was used. The forward operator was constructed explicitly as a sparse matrix, and its adjoint consistency was verified numerically: the relative discrepancy between $\langle Ax,y\rangle$ and $\langle x,A^Ty\rangle$ was $4.6\times10^{-7}$.

Low-dose measurements were simulated with an incident photon count $I_0=10^4$ and Poisson quantum noise. After logarithmic transformation, the measurement precision was approximated by the diagonal matrix
\[
\WP=\diag(\text{detected counts}).
\]
To isolate the feedback mechanism without neural-network training, two transparent surrogate estimators were used: (a) The image proposal $\gamma$ was obtained by total-variation denoising of the current iterate, serving as a placeholder for a learned prior mean; and (b) The uncertainty field $u$ was estimated as the voxel-wise variance of eight FBP reconstructions generated from independent noise resamplings, serving as a generic uncertainty surrogate.

The second surrogate is measurement-driven and therefore should not be interpreted as the final ELECTRIC epistemic uncertainty $u_{\mathrm{epi}}$. The pilot tests the generic uncertainty-to-precision control mechanism; full validation of the epistemic-only design requires the trained NIG network described in the preceding section.

For the mechanism-validation pilot, we retained the simpler
floor-free rational mapping used in the original implementation,
\begin{equation}
\Lambda
=
\operatorname{diag}
\left(
\frac{\lambda_{0,\mathrm{pilot}}}
{
1+\kappa u/\operatorname{median}(u)
}
\right),
\label{eq:pilot_precision}
\end{equation}
where normalization by the median makes the uncertainty scale
dimensionless. Here, $\lambda_{0,\mathrm{pilot}}>0$ denotes the
nominal maximum precision used specifically in the pilot experiment.
This pilot-specific mapping differs from the bounded formulation in
Eq.~(22), which includes a strictly positive precision floor and is
preferred for the general ELECTRIC framework.

Three outer iterations were used, and each inner MAP subproblem was solved by 60 conjugate-gradient iterations. Data splitting was strictly patient-wise. Hyperparameters were selected using three slices from subject L310; the held-out test set consisted of 17 slices from subjects L333 and L506, which were not used during tuning. 
Grid search selected a tuned fixed prior $\lambda=10^5$ and adaptive
parameters
\[
\lambda_{0,\mathrm{pilot}}=3\times10^5,
\qquad
\kappa=1.
\]

\subsubsection{Prespecified mechanism criteria and results}
Three criteria were specified before examining the final test results. Figure~\ref{fig:pilot_quant} summarizes the quantitative findings. The first panel shows the uncertainty--error relationship and the distribution of slice-wise Spearman coefficients. The second shows the risk--coverage curve. The third compares reconstruction error under tuned and deliberately mis-set prior strengths.

\paragraph{Uncertainty--error association.}
Across the 17 test slices, the median voxel-wise Spearman correlation between the uncertainty surrogate $u$ and absolute reconstruction error was $0.427$, with a range of $0.158$--$0.527$. This exceeded the prespecified median threshold of $0.4$, although the weaker correlations in some slices indicate substantial room for improvement by a learned evidential estimator.

\paragraph{Risk--coverage monotonicity.}
Voxels were ranked by uncertainty, and progressively smaller subsets containing the lowest-uncertainty voxels were retained. The mean error of the retained voxels decreased monotonically as coverage decreased. The aggregate curve was strictly monotone, and 76\% of individual slices were pointwise monotone. Thus, the uncertainty field supported selective trust: lower predicted uncertainty corresponded to lower reconstruction risk.

\paragraph{Comparison with a tuned fixed prior.}
The adaptive reconstruction achieved a mean RMSE of $3.82\times10^{-3}$, compared with $3.73\times10^{-3}$ for the tuned fixed-prior MAP baseline, a relative difference of approximately 2.2\%. The adaptive method achieved a slightly higher mean SSIM, $0.867$ versus $0.861$. These results indicate approximate parity with a carefully tuned fixed prior rather than a performance advantage.

\begin{figure}[t]
\centering
\includegraphics[width=\textwidth]{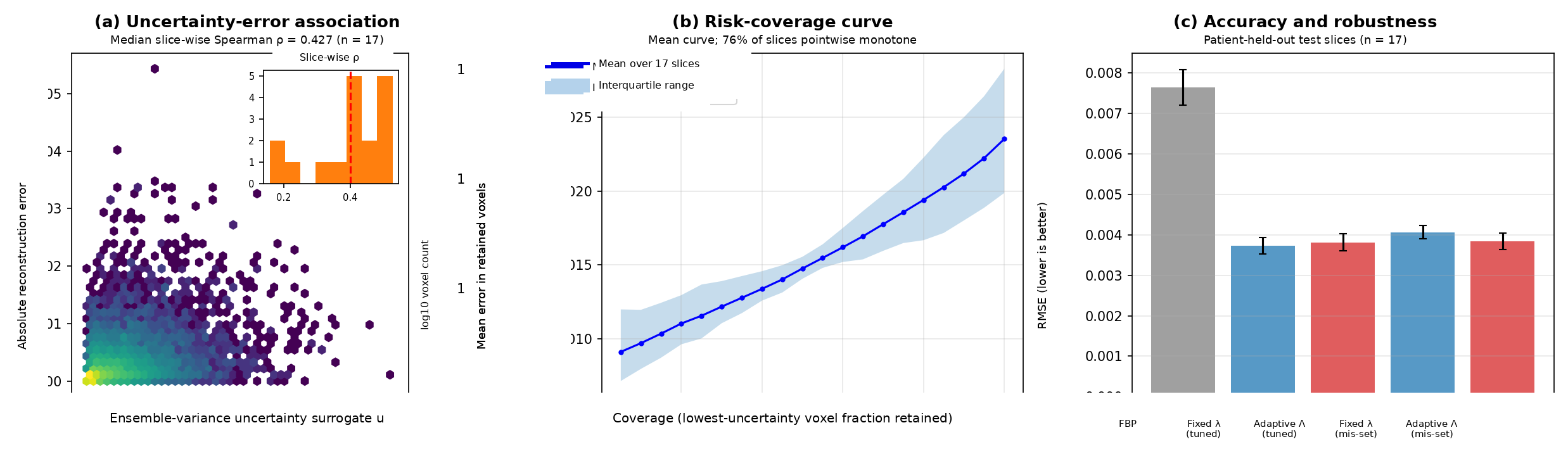}
\caption{Quantitative results of the mechanism-validation pilot study on 17 patient-held-out test slices. (a) Two-dimensional density relationship between the uncertainty surrogate $u$ and absolute reconstruction error for a representative slice; the inset shows the distribution of slice-wise Spearman coefficients, with median $\rho=0.427$. (b) Risk--coverage curve obtained by retaining progressively lower-uncertainty voxels; the mean retained-set error decreases as coverage decreases, supporting selective trust. (c) Mean RMSE and standard error. Adaptive precision is comparable with a tuned fixed prior, but is markedly more robust when the nominal prior strength is deliberately set too high.}
\label{fig:pilot_quant}
\end{figure}

\subsubsection{Robustness to prior-strength mismatch}
An additional observation was robustness to misspecification of the nominal prior strength. When the prior strength was deliberately increased to $10^6$, the fixed-prior reconstruction exhibited an RMSE degradation of approximately 8.8\% relative to its tuned setting. In contrast, the adaptive method degraded by only approximately 0.7\%. The mechanism is consistent with the intended interpretation of ELECTRIC: regions with larger uncertainty automatically receive lower prior precision, allowing the measurements to compensate for an overly aggressive global prior.
This result suggests a value proposition distinct from only improving accuracy. The adaptive precision field may reduce the need for scenario-specific tuning and provide self-protection against prior-strength mismatch. Figure~\ref{fig:pilot_visual} illustrates a representative slice. Spatially elevated uncertainty aligns qualitatively with regions of elevated adaptive-reconstruction error, consistent with the quantitative correlation and risk--coverage analyses.

\begin{figure}[t]
\centering
\includegraphics[width=\textwidth]{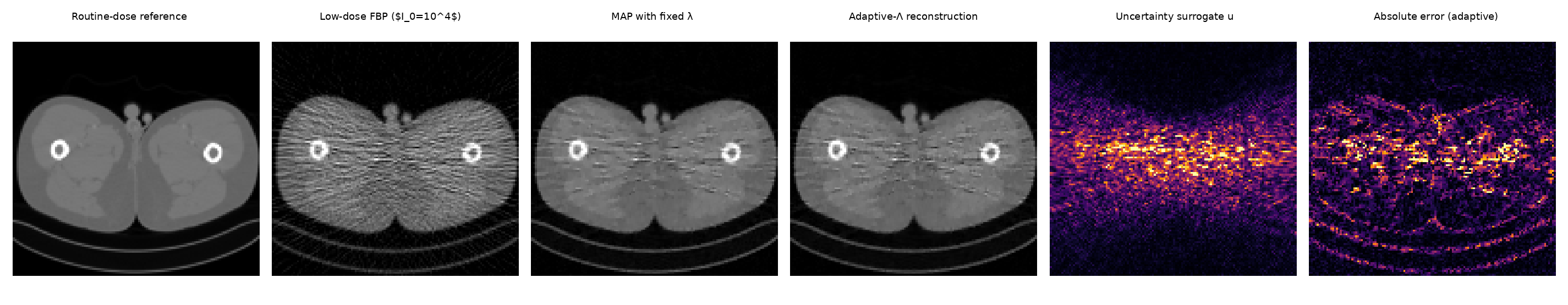}
\caption{Representative-slice reconstruction comparison and spatial correspondence between uncertainty and error. From left to right: routine-dose reference, low-dose FBP reconstruction at $I_0=10^4$, MAP reconstruction with tuned fixed precision, adaptive-precision reconstruction, uncertainty surrogate $u$, and absolute error of the adaptive reconstruction. Elevated uncertainty is spatially associated with elevated error, providing qualitative support for the quantitative analyses in Fig.~\ref{fig:pilot_quant}.}
\label{fig:pilot_visual}
\end{figure}

\subsubsection{Interpretation and limitations}
Our pilot results confirm that the ELECTRIC closed-loop mechanism is computationally executable on real Mayo CT images and that even a simple data-derived uncertainty surrogate contains useful error-predictive information. Nevertheless, the limitations are substantial and define the proper interpretation of the study.

First, both the uncertainty field and image proposal were produced by non-neural surrogate estimators. The results therefore do not validate NIG evidential regression, nor do they establish that a trained network will produce calibrated epistemic uncertainty. Second, the median correlation of approximately 0.43 is meaningful but moderate, and the low-correlation slices underscore the need for a learned estimator. Third, the $128\times128$ resolution, two-dimensional parallel-beam geometry, and small patient-held-out test set support only a mechanism-level conclusion, not a clinical-performance claim. Finally, because the uncertainty surrogate was generated from repeated measurement-noise realizations, it contains predominantly aleatoric information; the experiment therefore validates generic adaptive trust control rather than the final epistemic-only decomposition proposed for ELECTRIC.

Accordingly, this pilot is best viewed as a mechanism illustration preceding the full feasibility study. It provides an implementation skeleton and a prespecified evaluation protocol for replacing the surrogate estimators with a true NIG evidential network and testing the method at a much larger scale and under  realistic acquisition geometries.

\subsection{Feasibility Study with a Trained Evidential Network}
The mechanism-validation pilot of Section~4.1 deliberately replaced the evidential network with transparent non-neural surrogates, and it therefore left open the central question raised in Section~4.1.4: does a genuinely \emph{trained} NIG evidential network produce (i) a learned prior mean that improves on its input, and (ii) an epistemic-uncertainty field that is error-predictive enough to drive the adaptive-precision loop? This subsection reports a controlled feasibility study that answers both questions affirmatively at a small scale that nonetheless exercises the complete closed loop, using the direct evidential realization of Eqs.~(16)--(17), the bounded uncertainty-to-precision map of Eq.~(22), and the physics-guided MAP update of Eq.~(29). The study is intentionally scoped as a feasibility demonstration rather than a clinical benchmark; the limitations are stated explicitly at the end of this study.

\subsubsection{Data and imaging model}
One hundred slices from the ten patients of the AAPM Mayo Clinic Low-Dose CT Grand Challenge dataset~\cite{mccollough2017lowdose} were used. In contrast to the downsampling used in the Section~4.1 pilot, each $512\times512$ slice was reduced to $128\times128$ by extracting a randomly located $128\times128$ subregion with at least $35\%$ anatomical content; the routine-dose crop served as the reference $x_{\mathrm{gt}}$. Data were split strictly patient-wise into a training set (seven patients, 72 slices), a validation patient for hyperparameter selection (L310, 11 slices), and a held-out test set (L333 and L506, 17 slices).

Projections were simulated following common benchmark practice analogous to LoDoPaB-CT~\cite{leuschner2021lodopab}. A two-dimensional parallel-beam operator with 120 views and 185 detector elements was assembled as an explicit sparse matrix whose transpose is the exact matched back-projector; the adjoint discrepancy between $\langle Ax,y\rangle$ and $\langle x,A^Ty\rangle$ was $1.6\times10^{-16}$. Low-dose measurements used an incident count $I_0=10^4$ with Poisson noise, and after logarithmic transformation the measurement precision was the diagonal $\WP=\diag(\text{detected counts})$, exactly as in Eq.~(2).

\subsubsection{Evidential network and training}
The evidential network $f_\theta$ of Eq.~(16) was a compact U-Net ($\approx\!2.6\times10^{5}$ parameters) mapping a single-channel image estimate to the four per-pixel NIG maps $(\gamma,\nu,\alpha,\beta)$. Positivity and the moment constraints $\nu>0$, $\alpha>1$, $\beta>0$ (Table~1) were enforced with softplus parameterizations, $\alpha=\operatorname{softplus}(\cdot)+1+\varepsilon$. Supervision used the Student-$t$ NIG negative log-likelihood of Eq.~(27) and the evidence penalty of Eq.~(28), together with a light $\ell_1$ anchor on $\gamma$; the epistemic surrogate was read out through Eq.~(17).

Two practical measures were adopted, both compatible with the ELECTRIC formulation. First, in the spirit of the inference-matched training of Section~3.7, the training inputs comprised both filtered back-projections and one-step MAP reconstructions of the same crops, so that $f_\theta$ is exposed to the $k{=}0$ and $k{\ge}1$ input statistics encountered during the loop. Second, because a single evidential head produced an unstable epistemic pattern across training epochs---a known characteristic of deep evidential regression~\cite{meinert2023unreasonable}---the deployed estimates were stabilized with a snapshot ensemble~\cite{huang2017snapshot} of eight late-epoch checkpoints taken from the single training run. For each checkpoint we evaluated the predicted prior mean $\gamma$ and the epistemic surrogate $\uepi=\beta/(\nu(\alpha-1))$ of Eq.~(17), and averaged these two derived quantities across checkpoints; we did not average the raw $(\nu,\alpha,\beta)$ parameters, since a plain parameter average is not equivalent to the mixture predictive distribution. This snapshot ensemble is a low-cost variance-reduction device from a single optimization trajectory: its members are correlated and it therefore does not provide the independence of a true deep ensemble~\cite{lakshminarayanan2017ensembles}, which remains a stronger Stage-I estimator admissible within the framework.

\subsubsection{Closed-loop reconstruction and baselines}
Inference initialized $x^{(0)}=\mathrm{FBP}(p)$ and ran $K=3$ outer iterations of the loop $(\gamma,\uepi)\!\to\!\Lambda\!\to\!x^{(k+1)}$, with the bounded rational map of Eq.~(22) and each MAP subproblem solved by 60 warm-started conjugate-gradient iterations (Eq.~(29)). The adaptive configuration $(\lambda_{\max},\lambda_{\min},\kappa)$ and the tuned fixed prior $\lambda$ were selected on the validation patient alone. Four reconstructions were compared on the test set: low-dose FBP; the learned prior mean $\gamma$ alone (network output without the physics update); a tuned fixed-$\lambda$ MAP baseline sharing the same learned $\gamma$; and the full adaptive ELECTRIC reconstruction. Figure~\ref{fig:feas_quant} summarizes the quantitative findings, and Figure~\ref{fig:feas_visual} shows a representative slice.

\subsubsection{Results}

\paragraph{Learned prior mean.}
The trained network produced a strong image prior: on the held-out test slices, $\gamma$ reduced the reconstruction RMSE from $4.97\times10^{-2}$ (FBP) to $1.48\times10^{-2}$, an approximately $70\%$ reduction, confirming that the evidential proposal head learns a genuine denoising prior rather than merely reproducing its input.

\paragraph{Epistemic uncertainty and selective trust.}
The learned $\uepi$ was error-predictive. Across the 17 test slices the median voxel-wise Spearman correlation between $\uepi$ and absolute reconstruction error was $0.316$ (range $0.215$--$0.403$). For a matched comparison, the measurement-noise surrogate of Section~4.1 was re-evaluated under the identical cropping protocol used here; it yielded a lower median correlation ($\rho\approx0.19$) than the $0.427$ reported for downsampled slices, because cropping preserves high-frequency edge error that a purely measurement-driven surrogate cannot predict. The learned estimator therefore surpasses the matched surrogate baseline, the expected ordering. Ranking voxels by $\uepi$ produced a strictly monotone aggregate risk--coverage curve (Figure~\ref{fig:feas_quant}b); retaining the $20\%$ lowest-uncertainty voxels reduced the mean error to $0.58\times$ the full-image value, demonstrating that the learned uncertainty supports selective trust.

\paragraph{Closed-loop reconstruction.}
Table~\ref{tab:feas_accuracy} summarizes accuracy and data consistency. The learned prior mean already yields a high SSIM ($0.889$), but as a pure network output it is not constrained to the measurements and leaves a large weighted data-fit residual ($\chi^2/M=1.29$). The physics-guided MAP update restores measurement consistency ($\chi^2/M=0.90$ for the adaptive reconstruction, close to the ideal value of one) while attaining the lowest RMSE ($1.46\times10^{-2}$) and highest SSIM ($0.897$). This isolates the contribution of the physics step: relative to the pure network output, its principal benefit is data consistency, accompanied by a small accuracy gain. The adaptive reconstruction also slightly surpassed the validation-tuned fixed-$\lambda$ MAP baseline ($1.56\times10^{-2}$, SSIM $0.879$) that used the same learned prior mean. In a slice-level descriptive paired analysis the improvement was consistent: adaptive achieved lower RMSE on all $17$ slices, with a mean per-slice difference of $0.9\times10^{-3}$, and the advantage held within each of the two held-out patients. Because only two patients were held out, the slices are not statistically independent; we therefore treat this as descriptive evidence and defer formal significance testing to a larger patient cohort. Consistent with the conservative interpretation of Section~4.1, the adaptive mechanism is best understood as achieving parity-to-modest-improvement over a validation-tuned fixed prior rather than a large accuracy gain.

\begin{table}[H]
\centering
\caption{Reconstruction accuracy and data consistency on the 17 patient-held-out test slices (means over slices; RMSE in the normalized attenuation domain). Data consistency is the reduced weighted residual $\chi^2/M=(Ax-p)^{\top}\WP(Ax-p)/M$---equivalently, twice the data-likelihood term of Eq.~(2) divided by the number of measurements $M$---for which a value near one indicates consistency with the Poisson noise model; it is omitted for FBP. A slice-level paired comparison of the adaptive versus fixed prior is reported in the text.}
\label{tab:feas_accuracy}
\begin{tabular}{lccc}
\toprule
Method & RMSE ($\times10^{-2}$) & SSIM & $\chi^2/M$ \\
\midrule
Low-dose FBP                              & $4.97$          & $0.467$          & --     \\
Learned prior mean $\gamma$ (no physics)  & $1.48$          & $0.889$          & $1.29$ \\
MAP, validation-tuned fixed $\lambda$     & $1.56$          & $0.879$          & $0.86$ \\
Adaptive $\Lambda$ (ELECTRIC)             & $\mathbf{1.46}$ & $\mathbf{0.897}$ & $0.90$ \\
\bottomrule
\end{tabular}
\end{table}

\begin{figure}[H]
\centering
\includegraphics[width=\textwidth]{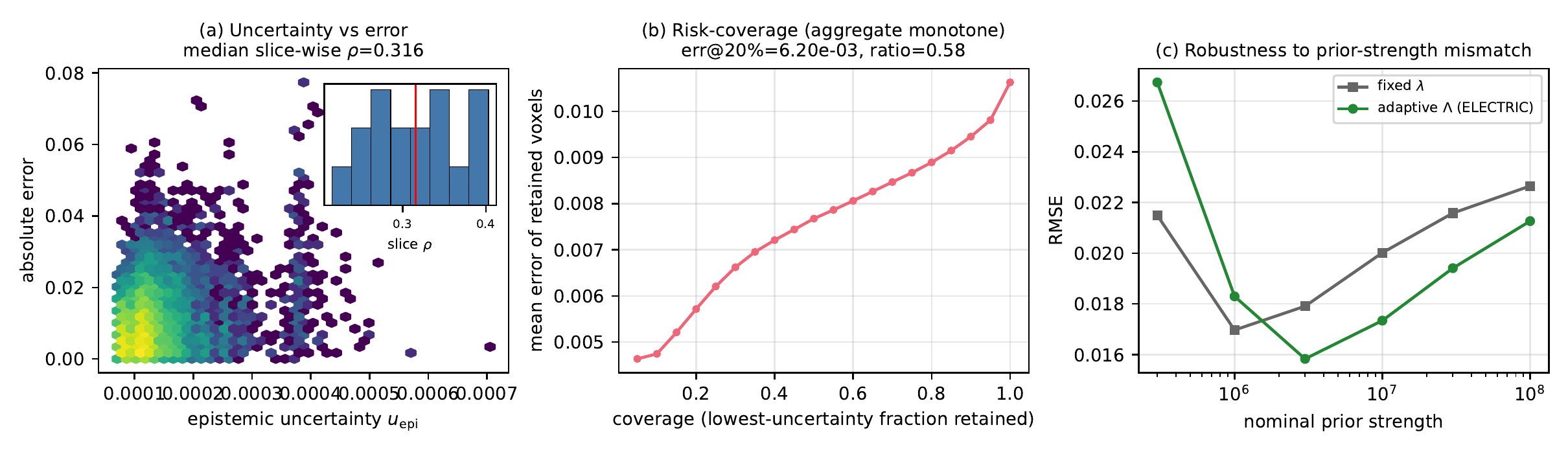}
\caption{Quantitative feasibility results on the 17 patient-held-out test slices. (a) Foreground uncertainty--error density for a representative slice; the inset shows the distribution of slice-wise Spearman coefficients, with median $\rho=0.316$. (b) Aggregate risk--coverage curve; the mean error of retained voxels decreases monotonically as lower-uncertainty voxels are retained. (c) RMSE versus nominal prior strength; the adaptive precision field attains the lowest minimum and remains below the tuned fixed prior throughout the over-regularization regime.}
\label{fig:feas_quant}
\end{figure}

\paragraph{Robustness to prior-strength mismatch.}
The distinctive behavior appears when the nominal prior strength is misspecified (Figure~\ref{fig:feas_quant}c). The tuned fixed prior is optimal only in a narrow band and degrades on either side; the adaptive field attains a lower minimum and, throughout the entire over-regularization regime ($\lambda_{\max}\!\ge\!3\times10^{6}$), remains consistently below the fixed-prior curve---for example $1.73$ versus $2.00\times10^{-2}$ at $10^{7}$ and $2.13$ versus $2.27\times10^{-2}$ at $10^{8}$. Because high-uncertainty regions automatically receive lower precision, an overly aggressive global prior is partially self-corrected by the measurements. This is precisely the value proposition anticipated in Section~4.1.3: reduced sensitivity to scenario-specific tuning in the practically relevant direction, since a strong learned prior is rarely deliberately under-weighted.

\begin{figure}[t]
\centering
\includegraphics[width=\textwidth]{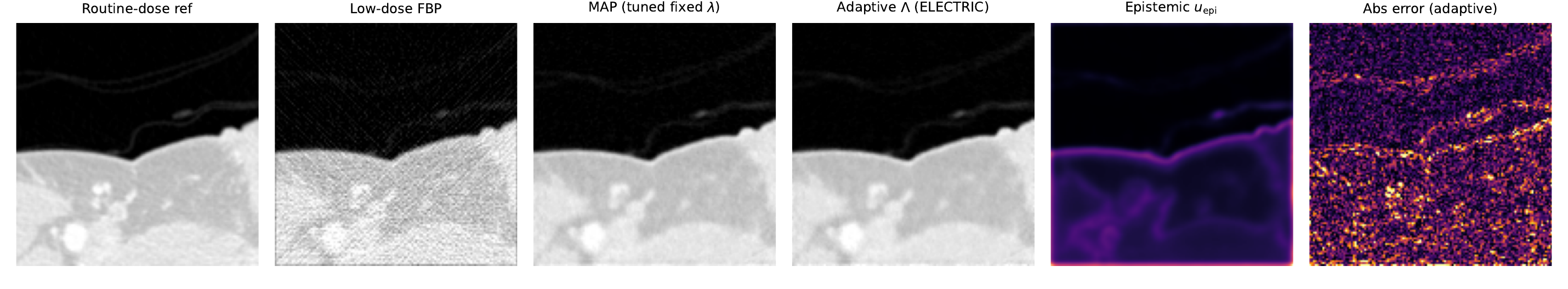}
\caption{Representative held-out slice. From left to right: routine-dose reference; low-dose FBP at $I_0=10^4$; MAP with a tuned fixed prior; adaptive-precision reconstruction; the learned epistemic uncertainty $\uepi$; and the absolute error of the adaptive reconstruction. The learned uncertainty concentrates on anatomical boundaries, and elevated uncertainty aligns spatially with elevated error, consistent with the quantitative analyses in Figure~\ref{fig:feas_quant}.}
\label{fig:feas_visual}
\end{figure}

\subsubsection{Interpretation and limitations}
This study advances beyond the Section~4.1 pilot in the decisive respect: the prior mean and the uncertainty are produced by a trained NIG evidential network rather than by surrogates, and the learned epistemic field both improves on the surrogate and drives an adaptive loop that matches or exceeds a validation-tuned fixed prior while being markedly more robust to prior-strength misspecification. Several limitations bound the claim. First, the scale is deliberately small---$128\times128$ crops, two-dimensional parallel-beam geometry, a compact network trained on a single CPU, and 17 held-out slices---so the results support a feasibility conclusion rather than a clinical-performance claim. Second, the uncertainty--error correlation, while exceeding the matched surrogate, is moderate; part of this reflects the cropping protocol, which preserves high-frequency edge error that is intrinsically harder to predict than the smoother error of downsampled images, and part reflects the known calibration difficulty of deep evidential regression~\cite{amini2020deep,meinert2023unreasonable}, which motivated the snapshot-ensemble stabilization. Moreover, the uncertainty was validated as error-predictive---through uncertainty--error correlation and risk--coverage---rather than as formally calibrated; reliability diagrams and proper scoring rules were not assessed and are left to future work. Third, the evidential network was trained stage-wise on the prior and uncertainty objectives rather than jointly through the final reconstruction loss, and the uncertainty-to-precision map was the fixed bounded form of Eq.~(22); joint training through the reconstruction objective, the learnable mapping $g_\phi$ of Section~3.6, and the Bayesian-consistent virtual-evidence parameterization of Section~3.5 remain natural next steps. Taken together, the experiment provides a complete closed-loop demonstration of the ELECTRIC pipeline with a trained evidential network and a prespecified protocol for scaling to fan/cone-beam geometries, full-resolution images, and larger cohorts.

\section{Discussion}
%\section{Unified Interpretation and Practical Implications}

The two simulation studies in Section~4 establish the empirical status of ELECTRIC. The mechanism-validation pilot (Section~4.1) shows, with transparent surrogates, that an uncertainty field carries enough error-predictive information to steer prior precision. The feasibility study (Section~4.2) then closes the loop with a trained Normal--Inverse--Gamma evidential network: the learned prior mean improves substantially on filtered back-projection, the learned epistemic uncertainty is error-predictive and yields a strictly monotone aggregate risk--coverage relationship, and the physics-guided MAP update restores the measurement consistency that the pure network output lacks while matching or exceeding a validation-tuned fixed prior and improving robustness to prior-strength misspecification. This constitutes a complete closed-loop demonstration of the pipeline. It remains a feasibility-scale result: the network is trained stage-wise rather than jointly through the final reconstruction loss, the uncertainty-to-precision map is fixed rather than learned, and formal calibration of the predictive uncertainty is not yet established. These considerations shape the directions discussed below.

The proposed framework distinguishes three conceptually different layers
that are often conflated in uncertainty-aware image reconstruction.

First, the probabilistic model specifies a statistical representation of
uncertainty.
In the present implementation, this representation is provided by the
Normal--Inverse--Gamma (NIG) family, which yields analytically convenient
decompositions of aleatoric and epistemic uncertainty.
However, the NIG model should be regarded as a modeling assumption rather
than an absolute description of the underlying uncertainty.
Its principal role is to provide a structured and differentiable latent
representation that can be jointly optimized with image reconstruction.

Second, deep learning performs a data-driven adaptation of this statistical
representation.
The evidential network jointly predicts an image proposal together with its
associated uncertainty parameters under common supervision.
Consequently, the learned uncertainty is no longer merely the analytical
uncertainty implied by the original NIG model, but rather an empirical
representation optimized for the reconstruction task.
This constitutes a data-driven adaptation of the statistical uncertainty representation.

Third, uncertainty itself is not the final objective.
Instead, the uncertainty representation is further transformed into a
positive precision map that determines how strongly the learned image
proposal should influence the subsequent physics-guided MAP reconstruction.
Because this transformation is optimized directly through the final
reconstruction objective, the resulting precision map should be interpreted
as task-aware prior confidence rather than as statistical uncertainty.
Accordingly, uncertainty estimation becomes an intermediate representation,
while adaptive prior confidence becomes the quantity that directly controls
image formation.

This viewpoint also clarifies the relationship between the proposed
framework and existing evidential learning methods.
Current deep evidential regression typically predicts the four
Normal--Inverse--Gamma parameters directly.
Such a parameterization is computationally convenient and has demonstrated
good empirical performance.
Nevertheless, the predicted parameters should not necessarily be interpreted
as exact Bayesian posteriors arising from conjugate updating.
Recent theoretical analyses suggest that only a restricted subset of the
NIG parameter space is reachable through exact Bayesian updating from a
fixed prior.
This observation motivates future Bayesian-consistent parameterizations in
which the network predicts virtual sufficient statistics, such as an
effective evidence size, sample mean, and sample scatter, from which the
NIG parameters are obtained analytically through conjugate updating.
Such realizations naturally satisfy Bayesian structural constraints while
remaining fully compatible with the two-stage ELECTRIC framework.
Importantly, the present framework is independent of the specific
parameterization used for uncertainty estimation.

Several practical considerations remain important.
First, the learned task-aware prior confidence should not be interpreted as
a calibrated posterior precision, since it may additionally absorb model
mismatch, optimization behavior, and task-specific adaptation.
Second, Stage-I uncertainty should be evaluated using uncertainty metrics
such as calibration, proper scoring rules, reliability diagrams, and
out-of-distribution detection, whereas Stage-II should be evaluated by its
ability to improve reconstruction quality, data consistency, robustness,
and numerical stability.
Third, identifiability deserves further investigation because different
parameterizations of uncertainty or precision may produce similar
reconstruction performance.
Finally, learnable uncertainty-to-precision mappings, including neural
networks or other differentiable monotone transformations, provide a
natural extension of the present framework and may further improve adaptive
precision estimation for reconstruction.

The proposed framework is not restricted to the NIG family or to computed
tomography.
Stage-I uncertainty may arise from evidential learning, Bayesian neural
networks, variational inference, diffusion-based uncertainty estimation,
ensemble methods, or future probabilistic representations.
Likewise, Stage-II adaptive precision can naturally be integrated into MRI,
PET, SPECT, ultrasound, multimodal reconstruction, or other inverse
problems.
The central principle therefore remains unchanged: statistical
uncertainty describes what is learned from data, whereas task-aware
Bayesian prior confidence determines how strongly that learned
knowledge should participate in physics-guided image reconstruction.
From this perspective, adaptive Bayesian reconstruction becomes a
closed-loop decision process in which statistical learning determines
not only what image hypothesis should be generated, but also how
the associated confidence level should influence the final reconstruction.

\end{document}